\begin{document}

\title{Gene Selection With  \\ Guided Regularized Random Forest}

\author{Houtao Deng}
\ead{hdeng3@asu.com}
\address{Intuit, Mountain View, CA, USA}
\author{George Runger}
\ead{george.runger@asu.edu}
\address{Arizona State University, Tempe, AZ, USA}
\cortext[cor1]{This research was partially supported by ONR grant N00014-09-1-0656.}
%\cortext[cor2]{The initial manuscript was submitted on July 19, 2011.}

%\IEEEcompsocitemizethanks{
%\IEEEcompsocthanksitem Software and demos can be found in the package available at http://cran.r-project.org/web/packages/RRF/index.html.
%\IEEEcompsocthanksitem Houtao Deng is with Intuit, Mountain View, California. Email: hdeng3@asu.edu
%\IEEEcompsocthanksitem George Runger is with the School of CIDSE, Arizona State university, Tempe, Arizona. Email: george.runger@asu.edu
%\IEEEcompsocthanksitem This research was partially supported by ONR grant N00014-09-1-0656.

\begin{abstract}
  The regularized random forest (RRF) was recently proposed for feature selection by building only one ensemble. In RRF the features are evaluated on a part of the training data at each tree node. We derive an upper bound for the number of distinct Gini information gain values in a node, and show that many features can share the same information gain at a node with a small number of instances and a large number of features. Therefore, in a node with a small number of instances, RRF is likely to select a feature not strongly relevant.

  Here an enhanced RRF, referred to as the guided RRF (GRRF), is proposed. In GRRF, the importance scores from an ordinary random forest (RF) are used to guide the feature selection process in RRF. Experiments on 10 gene data sets show that the accuracy performance of GRRF
  is, in general, more robust than RRF when their parameters change. GRRF is computationally efficient, can select compact feature subsets, and has competitive accuracy performance, compared to RRF, varSelRF and LASSO logistic regression (with evaluations from an RF classifier). Also, RF applied to the features selected by RRF with the minimal regularization outperforms RF applied to all the features for most of the data sets considered here. Therefore, if accuracy is considered more important than the size of the feature subset, RRF with the minimal regularization may be considered. We use the accuracy performance of RF, a strong classifier, to evaluate feature selection methods, and illustrate that weak classifiers are less capable of capturing the information contained in a feature subset. Both RRF and GRRF were implemented in the ``RRF" R package available at CRAN, the official R package archive.
\end{abstract}

\begin{keyword}
classification; feature selection; random forest; variable selection.
\end{keyword}

\maketitle

% Please keep the Author Summary between 150 and 200 words
% Use first person. PLoS ONE authors please skip this step.
% Author Summary not valid for PLoS ONE submissions.

\newtheorem{Lemma}{Lemma}
\newtheorem{Theorem}{Theorem}

\section{Introduction}
Given a training data set consisting of $N$ instances, $P$ predictor variables/features ${X_i (i=1,...,P})$ and the class $Y\in\{1,2,...,C\}$, the objective of feature selection is to select a compact variable/feature subset without loss of predictive information about $Y$. Note feature selection selects a subset of the original feature set, and, therefore, may be more interpretable than feature extraction (e.g., principal component analysis \citep{jolliffe2002principal} and partial least squares regression \citep{geladi1986partial}) which creates new features based on transformations or combinations of the original feature set \citep{jain1997feature}. Feature selection has been widely used in many applications such as gene selection \citep{ruiz2006incremental,zhu2007markov,liu2010ensemble} as it can moderate the curse of dimensionality, improve interpretability \citep{guyon2003} and avoid the effort to analyze irrelevant or redundant features.

%Method like t-test and random forest can get the importance score of an individual variable, and

Information-theoretic measures such as symmetrical uncertainty and mutual information can measure the degree of association between a pair of variables and have been successfully used for feature selection, e.g., CFS (correlation-based feature selection) \citep{hall2000} and FCBF (fast correlation-based filter) \citep{liuhuan2004}. However, these measures are limited to two variables and do not capture high-order interactions between variables well. For example, the measures can not capture the exclusive OR relationship $Y$ = $XOR(X_1, X_2)$, in which neither $X_1$ nor $X_2$ is predictive individually, but $X_1$ and $X_2$ together can correctly determine $Y$ \citep{jakulin2003analyzing}.

LASSO logistic regression \citep{tibshirani1996regression} and recursive feature elimination with a linear SVM (SVM-RFE) \citep{guyon2002} are well-known feature selection methods based on classifiers. These methods assume a linear relationship between the log odds of the class and the predictor variables (LASSO logistic regression) or between the class and the predictor variables (linear SVM). Furthermore, before using these methods, one often needs to preprocess the data such as transforming categorical variables to binary variables or normalizing variables of different scales.

A random forest (RF)\citep{breiman2001} classifier has been commonly used for measuring feature importance \citep{Andy2002}. An RF naturally handles numerical and categorical variables, different scales, interactions and nonlinearities, etc. Although the RF feature importance scores can be used to select $K$ features with the highest importance scores \emph{individually}, there could be redundancy among the $K$ features. Consequently, the selected features subset can differ from the best \emph{combination} of $K$-features, that is, the best feature subset. Similarly, Boruta \citep{rudnicki2010feature}, a method based on RF, aims to select a set of relevant features, which is different from the objective of a relevant and also non-redundant feature subset.

A feature selection method based on RF, varSelRF\citep{Uriarte2006}, has become popular. varSelRF consists of multiple iterations and eliminates the feature(s) with the least importance score(s) at each iteration. Since eliminating one feature at each iteration is computationally expensive, the authors considered eliminating a fraction, e.g., $1/5$, of the features at each iteration. However, when there is a large number of features, many features are eliminated at each iteration, and, thus, useful features, but with small importance scores, can be eliminated.

The ACE algorithm \citep{tuv2009} is another ensembles-based feature selection method. It was shown to be effective, but it is more computationally demanding than the simpler approaches considered here. It requires multiple forests to be constructed, along with multiple gradient boosted trees \citep{friedman01greedy}.

Recently, the regularized random forest (RRF) was proposed for feature selection with one ensemble \citep{houtaoFSReg2012}, instead of multiple ensembles \citep{Uriarte2006,tuv2009}. However, in RRF the features are evaluated on a part of the training data at each tree node and the feature selection process may be greedy.

Here we analyze a feature evaluation issue that occurs in all the usual splitting algorithms at tree nodes with a small number of training instances. To solve this issue, we propose the guided RRF (GRRF) method, in which the importance scores from an ordinary RF are used to guide the feature selection process in RRF. Since the importance scores from an RF are aggregated from all the trees based on all the training data, GRRF is expected to perform better than RRF. %We used 10 gene expression data sets in our experiments and the results show that GRRF can select feature subsets significantly smaller than RRF with similar accuracy performance. Furthermore, the feature subsets selected by GRRF lead to significantly better performance than the ones selected by varSelRF and LASSO logistic regression in terms of the classification accuracy of RF. % an RF is used as the classifier. %selection problem. %Experiments on 10 gene expression data sets demonstrated RRF has competitive accuracy, but requires significantly less computational time than the recent method varSelRF \citep{diaz2006gene}. In addition, the number of features selected from RRF is less variable than varSelRF for some data sets.

%Section \ref{sec:Back} presents previous work. Section \ref{sec:GRRF} describes the GRRF method. Section \ref{sec:EXP} presents the experimental results. Section \ref{sec:CON} concludes this work.

Section \ref{sec:Background} presents previous work. Section \ref{sec:IssueSparse} discusses the node sparsity issue when evaluating features at tree nodes with a small number of training instances. Section \ref{sec:GRRF} describes the GRRF method. %Section \ref{sec:Para} introduces the procedures of parameter selection.
Section \ref{sec:EXP} presents and discusses the experimental results. Section \ref{sec:CON} concludes this work.

\section{Background}\label{sec:Background}
\subsection{Variable importance scores from Random Forest}\label{sec:RF}
A random forest (RF) \citep{breiman2001} is a supervised learner that consists of multiple decision trees, each of which grown on a bootstrap sample from the original training data. The Gini index at node $v$, $Gini(v)$, is defined as
$$
Gini(v)= \sum_{c=1}^{C}\hat p^v_c (1 - \hat p^v_c)
$$
where $\hat p^v_c$ is the proportion of class-$c$ observations at node $v$. The Gini information gain of $X_i$ for splitting node $v$, $Gain(X_i,v)$, is the difference between the impurity at the node $v$ and the weighted average of impurities at each child node of $v$. That is, %The weights are proportional to the number of instances assigned to each child from the split at node $v$ so that
$$\setlength\arraycolsep{0.1em}
\begin{array}{ll}
  Gain(X_i,v) = &  \nonumber\\
  Gini(X_i,v)-w_L Gini(X_i,v^L)-w_R Gini(X_i,v^R) &   \nonumber\\
\end{array}
$$
where $v^L$ and $v^R$ are the left and right child nodes of $v$, respectively, and $w_L$ and $w_R$ are the proportions of instances assigned to the left and right child nodes. At each node, a random set of $mtry$ features out of $P$ is evaluated, and the feature with the maximum $Gain(X_i,v)$ is used for splitting the node $v$.

The importance score for variable $X_i$ can be calculated as
\begin{equation}\label{equation:RFImp}
Imp_i=\frac{1}{ntree}\sum_{v\in S_{X_i}}Gain(X_i,v)
\end{equation}
where $S_{X_i}$ is the set of nodes split by $X_i$ in the RF with $ntree$ trees. The RF importance scores are commonly used to evaluate the contributions of the features regarding predicting the classes. %and to rank the features. % be used to rank the featuresselect the $K$ features with the largest importance scores (the $K$ best features), but they can not be used directly for feature selection which aims to select the best $K$-feature subset. %is not necessarily the best feature set because some of the most important features can be redundant to each other.

\IncMargin{1em}
\begin{algorithm}
\scriptsize
\LinesNumbered
\SetKwData{Left}{left}\SetKwData{This}{this}\SetKwData{Up}{up}
\SetKwFunction{Union}{Union}\SetKwFunction{FindCompress}{FindCompress}
\SetKwInOut{Input}{input}\SetKwInOut{Output}{output}
\Input{$F$ and $\lambda$}
\Output{$F$}
\BlankLine
\emph{$gain^* \leftarrow 0$, $count \leftarrow 0$, $f^* \leftarrow -1$, $f_1 \leftarrow \{1,2,3,...,P\} $}, $f_2 \leftarrow \emptyset$ \ \\
%\While{f_1   $m\leftarrow 1$ \KwTo $P$}{
\While{$f_1 \neq \emptyset$}{
$m \leftarrow select(f_1)$ //randomly select feature index $m$ from $f_1$ \\
$f_2 \leftarrow \{f_2,m\}$ //add $m$ to  $f_2$\\
$f_1 \leftarrow f_1 - m$   //remove $m$ from $f_1$\\
\emph{$gain_R(X_m,v) \leftarrow 0$}\

\If{$X_m\in F$}{\label{lt}
$gain_R(X_m,v)\leftarrow gain(X_m,v)$ //calculate $gain_R$ for all variables in $F$
}
\If{$X_m \notin F$ and $count<\lceil \sqrt P \rceil$}{\label{lt}
$gain_R(X_m) \leftarrow \lambda \cdot gain(X_m)$ \ //regularize the gain if the variable is not in $F$
$count \leftarrow count+1$\
}
\If{$gain_R(X_m,v) > gain^*$}{\label{lt}
$gain^* \leftarrow gain_R(X_m,v)$, $f^* \leftarrow m$\
}
}
\If{$f^* \neq -1$ and $f^* \notin F$}{\label{lt}
$F\leftarrow\{F,f^*\}$\
}
\Return{F}
\caption{Feature selection at node $v$.}\label{algo_disjdecomp}
\end{algorithm}\DecMargin{1em}

\subsection{Regularized Random Forest}\label{sec:RRF}
The regularized random forest (RRF) \citep{houtaoFSReg2012} applies the tree regularization framework to RF and can select a compact feature subset. While RRF is built in a way similar to RF, the main difference is that the regularized information gain, $Gain_R(X_i,v)$, is used in RRF
\begin{equation}\label{eq:regularizedIG}
Gain_R(X_i,v) =
\begin{cases}
 \lambda \cdot Gain(X_i,v) & \text{$i \notin F$} \\
 Gain(X_i,v) & \text{$i \in F$} \\
\end{cases}
\end{equation}
where $F$ is the set of indices of features used for splitting in previous nodes and is an empty set at the root node in the first tree. Here $\lambda \in(0,1]$ is called the penalty coefficient. When $i \notin F$, the coefficient penalizes the $i^{th}$ feature for splitting node $v$. A smaller $\lambda$ leads to a larger penalty. RRF uses $Gain_R(X_i,v)$ at each node, and adds the index of a new feature to $F$ if the feature adds enough predictive information to the selected features.

RRF with $\lambda=1$, referred to as RRF(1), has the minimum regularization. Still, a new feature has to be more informative at a node than the features already selected to enter the feature subset. The feature subset selected by RRF(1) is called the \emph{least regularized subset}, indicating the minimal regularization from RRF.

Figure \ref{fig:fsProcess} illustrates the feature selection process. The nodes in the forest are visited sequentially (from the left to the right and from the top to the bottom). The indices of three distinct features used for splitting are added to $F$ in Tree 1, and
the index of $X_5$ is added to $F$ in Tree 2. Algorithm \ref{algo_disjdecomp} shows the feature selection process at each node.

\def\myWidth{3.5}
\begin{figure*}[ht]
\centering
\includegraphics[width= \myWidth in]{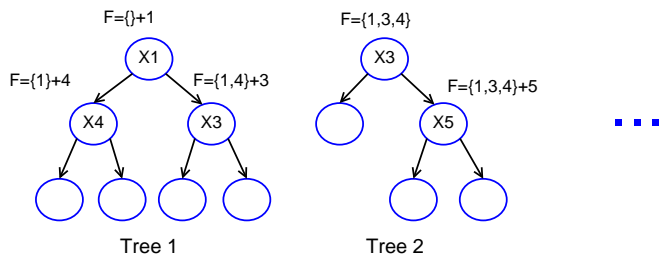}
\caption[2]{The feature selection procedure of RRF. The non-leaf nodes are marked with the splitting variables. Three distinct features used for splitting are added to $F$ in Tree 1, and one feature $X_5$ is added to $F$ in Tree 2.\label{fig:fsProcess}}
\end{figure*}

Similar to $Gain_R(\cdot)$, a penalized information gain was used to suppress spurious interaction effects in the rules extracted from tree models \citep{friedmanpredictive2008}. The objective of \citet{friedmanpredictive2008} was different from the goal of a compact feature subset here. Also, the regularization used in \citet{friedmanpredictive2008} only reduces the redundancy in each path from the root node to a leaf node, but the features extracted from such tree models can still be redundant.

%Both \citep{houtaoFSReg2012} and \citep{friedmanpredictive2008} used a fixed $\lambda_i \equiv \lambda$ for $i=1,...,P$ and we discuss the benefits of an adaptive $\lambda_i$ next.
\section{The Node Sparsity Issue} \label{sec:IssueSparse}
This section discusses an issue of evaluating features in a tree node with a small number of instances, referred to as the \emph{node sparsity issue} here. The number of instances decreases as the instances are split recursively in a tree, and, therefore, this issue can commonly occur in tree-based models.

In a tree node, the Gini information gain or other information-theoretic measures are commonly used to evaluate and compare different features. However, the measures calculated from a node with a small number of instances may not be able to distinguish the features with different predictive information. In the following we establish an upper bound for the number of distinct values of Gini information gain for binary classification.

Let $D(f)$ denote the number of distinct values of a function $f$ (defined over a specified range), $N$ denote the number of instances at a non-leaf node $v$ and $N\geq2$ (otherwise it cannot be split). For simplicity, in the following we also assume $N$ is even. The following procedure is similar when $N$ is odd. Let $N_1$ and $N_2$ be the number of instances of class 1 and class 2, respectively, at the node. Let $L$ denote the number of instances at the left child node, and let $L_1$ and $L_2$ denote the number of instances for class 1 and class 2 at the left child node, respectively, with similar notations $R$, $R_1$ and $R_2$ for the right child node. Note $L\geq1$ and $R\geq1$. %For simplicity we write $Gain(X_i,v)$ as $Gain(v)$ in the following.
The notation $L_1L_2$ denotes the product function where $L_1$ and $L_2$ assume values in the feasible domain $0 \le L_1, L_2 \le L$ and $L_1 + L_2=L$.
Then we have the following lemmas and theorem.%Let $Cs$ be the minimum value $C1$ and C2, i.e. Cs=min(C1,C2).

\begin{Lemma}\label{Lemma1}
An upper bound of the number of distinct values of $L_1L_2$ is $\lceil(L+1)/2\rceil$. That is, $D(L_1L_2) \leq \lceil(L+1)/2\rceil$, where $\lceil \cdot \rceil$ denotes the ceiling.
\end{Lemma}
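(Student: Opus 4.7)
The plan is to reduce the problem to a single-variable count by using the constraint $L_1+L_2=L$, and then exploit the symmetry of the resulting quadratic to collapse pairs into single values.

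First I would substitute $L_2=L-L_1$ to rewrite the product as $g(L_1)=L_1(L-L_1)$, a function of the single integer variable $L_1\in\{0,1,\ldots,L\}$. This set has $L+1$ elements, which is a first (loose) upper bound of $L+1$ on $D(L_1L_2)$. To tighten it, I would observe the symmetry $g(L_1)=g(L-L_1)$, so that $L_1$ and $L-L_1$ always produce the same value. Hence I only need to count values of $g$ on $L_1\in\{0,1,\ldots,\lfloor L/2\rfloor\}$.

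Next I would argue that $g$ is strictly increasing on this restricted range, so the values $g(0),g(1),\ldots,g(\lfloor L/2\rfloor)$ are pairwise distinct. This gives
\[
D(L_1L_2)\le \lfloor L/2\rfloor + 1.
\]
Finally I would check the arithmetic identity $\lfloor L/2\rfloor+1=\lceil (L+1)/2\rceil$ for both parities of $L$ (for $L$ even, both sides equal $L/2+1$; for $L$ odd, both equal $(L+1)/2$), which yields the stated bound.

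There is essentially no obstacle here; the only place one must be careful is the parity bookkeeping when translating $\lfloor L/2\rfloor+1$ into $\lceil (L+1)/2\rceil$, and the brief justification that $g$ is strictly increasing on $\{0,1,\ldots,\lfloor L/2\rfloor\}$ (which follows from $g(k+1)-g(k)=L-2k-1>0$ whenever $k<L/2$). The same counting argument will adapt to odd $N$ later, since the bound depends only on $L$, not on $N$.
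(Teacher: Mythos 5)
Your proof is correct and rests on the same key idea as the paper's: the symmetry $L_1\mapsto L-L_1$ of the product $L_1(L-L_1)$, which the paper obtains by factoring $l_1l_2=m_1m_2$ into $(l_1-m_1)(L-l_1-m_1)=0$ and you state directly, both yielding the bound $\lfloor L/2\rfloor+1=\lceil(L+1)/2\rceil$. Your added monotonicity argument is not needed for the upper bound (it only shows the bound is attained), and is harmless.
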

\begin{proof}
$L_1$ has at most $L+1$ values ($L_1\in$\{0,1,...,L\}). Now let $l_1l_2$ and $m_1m_2$ be two realizations of $L_1L_2$, and let $l_1l_2=m_1m_2$, we have
$$\setlength\arraycolsep{0.1em}
\begin{array}{ccccr}
  &(L-l_1)l_1 & = & (L-m_1)m_1 \nonumber\\
   \Leftrightarrow& Ll_1 - l_1^2 - Lm_1 + m_1^2       & = & 0  \nonumber\\
   \Leftrightarrow& L(l_1-m_1)+(m_1+l_1)(m_1-l_1)       & = & 0  \nonumber\\
   \Leftrightarrow& (l_1 - m_1)(L - l_1 - m_1)      & = & 0  \nonumber\\
   \Leftrightarrow& l_1 = m_1 \ or \ l_1 = L - m_1     &  &   \nonumber\\
\end{array}
$$
Therefore, we obtain an upper bound for the number of distinct values for $L_1L_2$ as $D(L_1L_2)<= \lceil(L+1)/2\rceil$.
%Therefore there are at most $(L+1)$ distinct values for $L_1L_2$.  However, $L_1L_2$ with different $L_1$ values could have the same value.
\end{proof}

%%In addition, since L_1 and L_2 are the number of class 1 and class 2 data, they should be smaller than C1 and C2 respectively. Therefore D(L_1L_2)<= min(Cs, ?L/2?).

\begin{Lemma}\label{Lemma2}
$D(L_1L_2/L) \le N(N+2)/4 - 1$.
\end{Lemma}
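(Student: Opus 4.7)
The plan is to bound $D(L_1L_2/L)$ by first fixing the denominator $L$, invoking Lemma~\ref{Lemma1} on the numerator, and then taking a union bound over all feasible values of $L$. Since the left child must contain at least one instance and the right child must contain at least one (because $v$ is split), $L$ ranges over $\{1,2,\ldots,N-1\}$.

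For each fixed $L$ in this range, the ratio $L_1L_2/L$ is a monotone (in fact, linear) transformation of $L_1L_2$, so it takes at most as many distinct values as $L_1L_2$ itself. By Lemma~\ref{Lemma1}, this is at most $\lceil(L+1)/2\rceil$. A union bound over $L$ therefore gives
\begin{equation*}
D(L_1L_2/L)\;\le\;\sum_{L=1}^{N-1}\left\lceil\frac{L+1}{2}\right\rceil.
\end{equation*}

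The remaining step is a routine arithmetic simplification, which is where the chosen bound of Lemma~\ref{Lemma1} pays off. Rewriting $\lceil(L+1)/2\rceil=\lfloor L/2\rfloor+1$, the sum splits into $(N-1)+\sum_{L=1}^{N-1}\lfloor L/2\rfloor$. Because $N$ is assumed even, the floor terms pair up as $0,1,1,2,2,\ldots,(N/2-1),(N/2-1)$, collapsing to $(N/2-1)(N/2)$ after applying the triangular-number formula. Combining the two pieces yields $N(N+2)/4-1$, matching the target bound.

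I do not anticipate a conceptual obstacle here; the main thing to be careful about is the bookkeeping on the range of $L$ (ensuring that $L=0$ and $L=N$ are excluded, since those correspond to no split) and the parity assumption on $N$ that lets the pairing trick go through cleanly. The union bound is visibly loose — for instance, the value $0$ of $L_1L_2/L$ recurs for every $L$ — but since the lemma is stated with $\le$, a tighter count is unnecessary.
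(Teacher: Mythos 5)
Your proposal is correct and follows essentially the same route as the paper: fix $L$, apply Lemma~\ref{Lemma1} to bound the distinct values of $L_1L_2/L$ for that $L$, and sum the bounds over $L=1,\ldots,N-1$ to obtain $\sum_{L=1}^{N-1}\lceil(L+1)/2\rceil = N(N+2)/4-1$. Your write-up is in fact slightly more careful than the paper's, since it spells out the arithmetic simplification (via $\lceil(L+1)/2\rceil=\lfloor L/2\rfloor+1$ and the parity pairing) that the paper leaves implicit.
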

\begin{proof}
For each $L$, $L_1L_2/L$ has at most $D(L_1L_2)$ distinct values. Because $1 \leq L \leq N-1$, an upper bound for $D(L_1L_2/L)$ is derived as follows
\begin{equation}
  D(L_1L_2/L)  \le  \lceil2/2\rceil+\lceil3/2\rceil+...+\lceil N/2\rceil  =  N(N+2)/4-1  \nonumber\\
\end{equation}
%If $N$ is even, the last equality is one less than twice the sum of the integers from 1 to $N/2$. Twice the sum of the integers from 1 to $N/2$ is $N(N+2)/4 -1$ and the bound is obtained.
\end{proof}

\begin{Lemma}\label{Lemma3}
$D(L_1L_2/L + R_1R_2/R) \le N(N+2)/4 - 1$
\end{Lemma}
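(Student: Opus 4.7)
The plan is to parallel the argument for Lemmas 1 and 2: bound $D(h|_L)$ slice-by-slice in $L$ and then sum, where $h(L_1) := L_1L_2/L + R_1R_2/R$ with the determining relations $L_2 = L - L_1$, $R = N - L$, $R_1 = N_1 - L_1$, $R_2 = N_2 - L + L_1$. The target per-slice bound is $\lceil(L+1)/2\rceil$, so that summing from $L = 1$ to $N - 1$ reproduces the identity $\sum_{L=1}^{N-1}\lceil(L+1)/2\rceil = N(N+2)/4 - 1$ already established in the proof of Lemma 2.

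The algebraic step I would carry out is to rewrite $h$ as a quadratic in $L_1$ by completing the square. A short calculation yields
\[
h(L_1) \;=\; \frac{N_1 N_2}{N} \;-\; \frac{N \,(L_1 - L N_1/N)^2}{L(N - L)}.
\]
So $h$ is a downward-opening parabola in $L_1$ centered at $L_1 = L N_1/N$, and $h(L_1) = h(L_1')$ holds iff $L_1 = L_1'$ or $L_1 + L_1' = 2 L N_1 / N$. This is the same type of involution used in Lemma 1, only with the axis shifted from $L/2$ to $L N_1 / N$. Pairing integer arguments under this involution would reduce the $L + 1$ admissible values of $L_1 \in \{0, 1, \ldots, L\}$ to at most $\lceil(L+1)/2\rceil$ distinct values of $h$, and summing over $L$ then gives the claimed bound.

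The main obstacle is that $L N_1 / N$ need not be a half-integer, so the intra-slice pairing $L_1 \leftrightarrow 2 L N_1 / N - L_1$ is not guaranteed to send integers to integers in every $L$-slice. To close this gap I would invoke the additional global left-right symmetry $(L, L_1) \leftrightarrow (N - L, N_1 - L_1)$, which the closed-form expression for $h$ makes manifest since both $(L_1 - L N_1/N)^2$ and $L(N - L)$ are preserved under the swap. Counting representatives only with $L \le N/2$ and applying the intra-slice pairing only in the self-paired slice $L = N/2$ yields a total that, after an elementary arithmetic check, stays within $N(N+2)/4 - 1$ for every even $N \ge 2$.
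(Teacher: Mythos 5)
Your argument is correct, and it takes a genuinely different --- and more rigorous --- route than the paper, which disposes of this lemma in a single sentence: because the two terms are ``symmetric,'' the bound for one term carries over to the sum. Read as ``repeat the per-slice pairing of Lemmas 1 and 2,'' that claim does not survive scrutiny, and your completed square shows exactly why: the axis of the parabola sits at $LN_1/N$ rather than $L/2$, so a single $L$-slice of $h$ can carry up to $L+1$ distinct values. (Concretely, $N=8$, $N_1=3$, $N_2=5$, $L=3$ gives the four values $0$, $6/5$, $22/15$, $28/15$, exceeding $\lceil(L+1)/2\rceil=2$.) Your repair --- halving across slices via the global involution $(L,L_1)\leftrightarrow(N-L,N_1-L_1)$ instead of within slices --- works: the closed form makes the invariance of $h$ manifest, and with $M=N/2$ the count is at most $\sum_{L=1}^{M-1}(L+1)+\lceil(M+1)/2\rceil=\frac{M(M+1)}{2}-1+\lceil(M+1)/2\rceil\le M(M+1)-1=\frac{N(N+2)}{4}-1$; in fact you obtain the stronger bound of order $N^2/8$. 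The one detail to make explicit is that in the self-paired slice $L=N/2$ the map $L_1\mapsto N_1-L_1$ is an involution of the feasible range $\max(0,N_1-L)\le L_1\le\min(L,N_1)$ (its two endpoints sum to $N_1$), so the intra-slice pairing there is legitimate. In short: the paper buys brevity at the cost of a justification that is at best incomplete, while your decomposition both proves the stated bound and exposes why the naive extension of Lemma 1 fails.
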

\begin{proof}
%According to Lemma \ref{Lemma2}, we have $D(L_1L_2/L + R_1R_2/R) <= 2\cdot{(\frac{N+2}{2})}^2$
Because $L_1L_2/L$ and $R_1R_2/R$ are symmetric, the upper bound is the same as for one term. That is, $D(L_1L_2/L + R_1R_2/R) \le N(N+2)/4 - 1$
\end{proof}

\begin{Theorem}\label{Theory1}
For binary classification, an upper bound of the number of distinct information gain values is $N(N+2)/4 - 1$.
\end{Theorem}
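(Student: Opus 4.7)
The plan is to reduce the computation of $D(Gain(X_i,v))$ to the quantity bounded in Lemma~\ref{Lemma3} by writing out the Gini information gain explicitly for binary classification, then invoking the chain of lemmas already established.

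First I would specialize the Gini impurity to $C=2$. Since $\hat p_1^v + \hat p_2^v = 1$, we have $Gini(v) = 2\hat p_1^v \hat p_2^v = 2N_1N_2/N^2$, and analogously $Gini(v^L) = 2L_1L_2/L^2$ and $Gini(v^R) = 2R_1R_2/R^2$. Plugging in the child weights $w_L = L/N$ and $w_R = R/N$ into the definition of $Gain(X_i,v)$ gives
$$
Gain(X_i,v) \;=\; \frac{2N_1N_2}{N^2} \;-\; \frac{2}{N}\!\left(\frac{L_1L_2}{L} + \frac{R_1R_2}{R}\right).
$$

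Next I would observe that, at a fixed node $v$, the quantities $N$, $N_1$, $N_2$ are constants (they describe the instances that have reached $v$, not the candidate split), so the first term contributes nothing to the count of distinct gain values. Likewise, the factor $2/N$ in front of the variable term is a nonzero constant and therefore does not change the number of distinct values. Hence
$$
D\bigl(Gain(X_i,v)\bigr) \;=\; D\!\left(\frac{L_1L_2}{L} + \frac{R_1R_2}{R}\right).
$$

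Finally I would apply Lemma~\ref{Lemma3} directly to conclude $D(Gain(X_i,v)) \le N(N+2)/4 - 1$, which is the desired bound. I do not expect a genuine obstacle here: the heavy lifting was already done in Lemmas~\ref{Lemma1}--\ref{Lemma3}, and the only care needed is to confirm that the constants $N, N_1, N_2$ are held fixed when varying over candidate splits, so that they can be dropped when counting distinct values, and that the $2/N$ rescaling is harmless for the same reason.
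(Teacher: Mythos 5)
Your proposal is correct and follows essentially the same route as the paper's own proof: specialize the Gini impurity to two classes, observe that $Gini(v)$ and the factor $2/N$ are constants at the node so they do not affect the count of distinct values, reduce to $D(L_1L_2/L + R_1R_2/R)$, and invoke Lemma~\ref{Lemma3}. No gaps.
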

\begin{proof}
The information gain for spliting node $v$ is
$$Gain(v) = Gini(v)-w_L Gini(v^L)-w_R Gini(v^R)$$
Because $Gini(v)$ is a constant at node $v$, we only need to consider $D(w_L Gini(v^L)+w_R Gini(v^R))$. For two classes
$$\setlength\arraycolsep{0.1em}
\begin{array}{ll}
    & w_L Gini(v^L)  \nonumber\\
=   & (L/N)(L_1/L \cdot L_2/L + L_2/L \cdot L_1/L) \nonumber\\
=   & 2(L/N)(L_1L_2)/L^2
\end{array}
$$
Then
$$\setlength\arraycolsep{0.1em}
\begin{array}{ll}
    & w_L Gini(v^L)+w_R Gini(v^R) \nonumber\\
=   & 2(L/N)(L_1L_2)/L^2+ 2(R/N)(R_1R_2)/R^2  \nonumber\\
=   & (2/N)(L_1L_2/L) + (2/N)(R_1R_2/R)
\end{array}
$$
Because $N$ is a constant, we have $D(w_L Gini(v^L)+w_R Gini(v^R)) = D(L_1L_2/L+R_1R_2/R)$.
According to Lemma \ref{Lemma3}, $D(L_1L2/L + R_1R_2/R) \le N(N+2)/4 - 1$.
\end{proof}
Note similar conclusions may be applied to other information-theoretic measures such as the regularized information gain in Equation \ref{eq:regularizedIG}.

Consequently, when $N$ is small, the number of distinct Gini information gain values is small. For a large number of variables, many could have the same Gini information gain. For example, at a node with 10 instances there are at most 29 distinct Gini information gain values for binary classification problems. For 1000 genes with two classes, there are at least 1000-29=971 genes having the same information gain as other genes. The number of instances can be even smaller than 10 in a node of an RF or RRF as each tree is grown completely.

In RRF, a feature with the maximum regularized information gain is added at a node based on only the instances at that node. When multiple features have the maximum regularized information gain, one of these features is randomly selected. As discussed above, at a node with only a small number of instances less relevant, or redundant features may be selected. An additional metric is useful to help distinguish the features. In the following section, we introduce the guided regularized random forest that leverages the importance scores calculated from an RF based on all the training data.

\section{Guided Regularized Random Forest}\label{sec:GRRF}
%In RRF, $F$ starts from an empty feature set, and the features entering $F$ early have a large effect on the following feature selection process. Also, the features are evaluated at each node with only a part of the data, and therefore the feature selection process may not be stable.
The guided RRF (GRRF) uses the importance scores from a preliminary RF to guide the feature selection process of RRF. Because the importance scores from an RF are aggregated from all the trees of the RF based on all the training data, GRRF may be able to handle the node sparsity issue discussed in the previous section.

A normalized importance score is defined as
\begin{equation}
Imp_i' = \frac{Imp_i}{max_{j=1}^{P}Imp_j}
\end{equation}
where $Imp_i$ is the importance score from an RF (defined in Equation \ref{equation:RFImp}). Here $0\leq Imp_i'\leq 1$. Instead of assigning the same penalty coefficient to all features in RRF, GRRF assigns a penalty coefficient to each feature. That is,
\begin{equation}
Gain_R(X_i,v) =
\begin{cases}
 \lambda_i Gain(X_i,v) & \text{$X_i \notin F$} \\
 Gain(X_i,v) & \text{$X_i \in F$} \\
\end{cases}
\end{equation}
where $\lambda_i \in(0,1]$ is the coefficient for $X_i$ ($i\in\{1,...,P\}$) and is calculated based on the importance score of $X_i$ from an ordinary RF. That is,
\begin{equation}
 \lambda_i = (1-\gamma)\lambda_0 + \gamma Imp_i'\
\end{equation}
where $\lambda_0\in(0,1]$ controls the degree of regularization and is called the base coefficient, and $\gamma\in[0,1]$ controls the weight of the normalized importance score and is called the importance coefficient. Note that RRF is a special case of GRRF with $\gamma=0$. Given $\lambda_0$ and $\gamma$, a feature with a larger importance score has a larger $\lambda_i$, and, therefore, is penalized less. %Given the same $\gamma$, a larger $\lambda_0$ leads to  penalizes. %Given the same $\lambda_0$, a larger $\gamma$ leads to a higher penalty on $X_i$ if $Imp_i'<\lambda_0$. %When $\lambda_0=1$, a larger $\gamma$ lead to a larger penally.

In our experiments we found the feature subset size can be effectively controlled by changing either $\lambda_0$ or $\gamma$, but changing the latter often leads to better performance in terms of classification accuracy. To reduce the number of parameters of GRRF, we fix $\lambda_0$ to be 1 and consider $\gamma$ as the only parameter for GRRF. With $\lambda_0=1$, we have
\begin{equation}
\lambda_i = (1-\gamma) + \gamma Imp_i' = 1- \gamma(1-Imp_i')
\end{equation}
For a feature $X_i$ that does not have the maximum importance score ($Imp_i'\neq1$), a larger $\gamma$ leads to a smaller $\lambda_i$ and, thus, a larger penalty on $Gain(X_i,v)$ when $X_i$ has not been used in the nodes prior to node $v$. Consequently, $\gamma$ is essentially the degree of regularization.
Furthermore, GRRF with $\gamma=0$ is equivalent to RRF(1), with the minimal regularization.

\section{Experiments}\label{sec:EXP}
%The RRF package introduced here is implemented based on the original random forest code from \citep{breiman2001,Andy2002}. (\citep{houtaoRRF2011} implemented the regularized random forest based on the random tree code in Weka \citep{weka},)
%While \citep{houtaoFSReg2012} implemented the regularized trees using the random tree algorithm in the Weka/Java environment,

We implemented the RRF and GRRF algorithms in the ``RRF" R package based on the ``randomForest" package \citep{Andy2002}. The ``RRF" package is available from CRAN (http://cran.r-project.org/), the official R package archive.

Work by \citep{Uriarte2006} showed that the accuracy performance of RF is largely independent of the number of trees (between 1000 and 40000 trees), and $mtry=\sqrt P$ is often a reasonable choice. Therefore, we used $ntree=1000$ and $mtry=\sqrt P$ for the classifier RF and the feature selection methods RRF, GRRF and varSelRF. Guided by initial experiments, randomly sampling 63\% of the data instances (the same as the default setting in the ``randomForest" package) without replacement was used in RRF and GRRF. We also evaluated two well-known feature selection methods: varSelRF available in the ``varSelRF" R package and LASSO logistic regression available in the ``glmnet" R package. Unless otherwise specified, the default values in the packages were used. Also note $\lambda$ is the parameter of RRF, and as discussed, $\gamma$ is considered as the only parameter of GRRF here.

\begin{table*}[h]
\scriptsize
\centering
\caption{The number of groups identified, and the number of irrelevant or redundant features selected for different algorithms. \label{table:simuErr}}
% Table generated by Excel2LaTeX from sheet 'simulate data'
\begin{tabular}{|c|c|c|c|c|c|c|c|c|}
\hline
           &                \multicolumn{ 4}{|c|}{\# Groups } & \multicolumn{ 4}{|c|}{\# Irrelevant or Redundant Features} \\
\hline
    Method &      LASSO &   varSelRF &        RRF &       GRRF &      LASSO &   varSelRF &        RRF &       GRRF \\
\hline
   Average &          4 &       4.95 &          5 &       4.95 &        4.3 &       4.95 &       4.95 &       0.75 \\
\hline
  Std. err &      0.000 &      0.050 &      0.000 &      0.050 &      0.147 &      0.050 &      0.050 &      0.123 \\
\hline
\end{tabular}

\end{table*}

\subsection{Simulated Data Sets}
We start with a simulated data set generated by the following procedure. First generate 10 independent variables: $X_1$,..., $X_{10}$, each is uniformly distributed in the interval [0,1]. Variable $Y$ is calculated by the formula
$$Y = 10\sin(\pi X_1X_2) + 20(X_3 - 0.5)^2 + 10X_4 + 5X_5 + e$$
where $e$ follows a standard normal distribution. Note the above data generation procedure was described in \citep{friedman1991multivariate}. We simulated 1000 instances and then calculated the median of $Y$ as $\overline{y}$. We then labeled class 2 to the instances with $Y>\overline{y}$, and class 1 otherwise, so that it becomes a classification problem. Furthermore, we added five other variables with $X_{11}$=$X_1$, $X_{12}$=$X_2$, $X_{13}$=$X_3$, $X_{14}$=$X_4$, $X_{15}$=$X_5$. Consequently, the feature selection solution is \{$(X_1|X_{11})$ \& $(X_2|X_{12})$ \& $(X_3|X_{13})$ \& $(X_4|X_{14})$ \& $(X_5|X_{15})$\}, where \& stands for ``and" and $(X_1|X_{11})$ indicates that one and only one from this group should be selected. For example, for the data set considered here, both $(X_1,X_2,X_3,X_4,X_5)$ and $(X_{11},X_2,X_{13},X_4,X_5)$ are correct solutions. However, $(X_1,X_3,X_4,X_5)$ misses the group $(X_2|X_{12})$. Furthermore, $(X_1,X_2,X_3,X_4,X_5,X_{11})$ has a redundant variable because $X_{11}$ is redundant to $X_1$.

We simulated 20 replicates of the above data set, and then applied GRRF, RRF and two well-known methods varSelRF and LASSO logistic regression to the data sets. Here $\gamma$ of GRRF was selected from \{0.4,0.5,0.6\}, $\lambda$ of RRF was selected from \{0.6,0.7,0.8\}, and the regularization parameter of LASSO logistic regression was selected from \{0.01,0.02,0.03,0.05,0.1,0.2\}, all by 10-fold CV.

The results are shown in Table \ref{table:simuErr}. LASSO logistic regression identifies the least number of groups on average (4). The methods varSelRF, RRF and GRRF identify almost all the groups, but varSelRF and RRF select 4.95, on average, irrelevant or redundant variables, and GRRF only selects 0.75 irrelevant or redundant variables. This experiment shows GRRF's potential in selecting a relevant and non-redundant feature subset.

\begin{table*}[!]
\centering
%\scriptsize
\scriptsize
\caption{Summary of the data sets.}
\label{table:summary}
% Table generated by Excel2LaTeX from sheet 'Sheet1'
\begin{tabular}{ccccc}
\hline
   Data set &     Reference &      \# Examples &   \# Features &    \# classes \\
\hline
adenocarcinoma & \citep{Adenocarcinoma2003} &         76 &       9868 &          2 \\

     brain & \citep{Brain2002} &         42 &       5597 &          5 \\

breast.2.class & \citep{Breast2002} &         77 &       4869 &          2 \\

breast.3.class & \citep{Breast2002} &         95 &       4869 &          3 \\

     colon & \citep{Colon1999} &         62 &       2000 &          2 \\

  leukemia & \citep{Leukaemia1999} &         38 &       3051 &          2 \\

  lymphoma & \citep{Lymphoma2000} &         62 &       4026 &          3 \\

     nci60 & \citep{NCI602000} &         61 &       5244 &          8 \\

  prostate & \citep{Prostate2002} &        102 &       6033 &          2 \\

     srbct & \citep{Srbct2001} &         63 &       2308 &          4 \\
\hline
\end{tabular}
\end{table*}

\subsection{Gene Data Sets}
The 10 gene expression data sets analyzed by \citet{Uriarte2006} are considered in this section. The data sets are summarized in Table \ref{table:summary}. For each data set, a feature selection algorithm was applied to 2/3 of the instances (selected randomly) to select a feature subset. Then a classifier was applied to the feature subset. The error rate is obtained by applying the classifier to the other 1/3 of the instances. This procedure was conducted 100 times with different random seeds, and the average size of feature subsets and the average error rate over 100 runs were calculated. In the experiments, we considered RRF, GRRF, varSelRF and LASSO logistic regression as the feature selection algorithms, and random forest (RF) and C4.5 as the classifiers. %The  cross-validation (CV) error rates from the classifiers were used to evaluate the feature selection methods.

\begin{table}[!]
\scriptsize
\centering
\caption{The number of features selected in the least regularized subset selected by RRF (i.e., RRF(1)), the total number of original features (``All"), and the error rates of RF applied to the least regularized subset and all the features. The win-lose-tie results of ``All" compared to RRF(1) are shown. Here ``$\circ$" or ``$\bullet$" represents a significant difference at the 0.05 level, according to the paired t-test. %Particularly, ``$\circ$" or ``$\bullet$" standards for a higher or lower average error rate of ``All" compared to RRF(1).
RRF(1) uses many fewer features than the original features, and wins on 7 data sets. There are significant differences for four data sets.  \label{table:RFGRRF}}
% Table generated by Excel2LaTeX from sheet 'Sheet1'
% Table generated by Excel2LaTeX from sheet 'Sheet1'
% Table generated by Excel2LaTeX from sheet 'Sheet1'
\begin{tabular}{c|cc|ccc}
\hline
           & \multicolumn{ 2}{|c}{Number of features} & \multicolumn{ 2}{|c}{Average error rates} &            \\

           &    RRF(1) &        All &    RRF(1)-RF &        All-RF &            \\
\hline
adenocarcinoma &         86 &       9868 &      0.158 &      0.159 &            \\

     brain &         97 &       5597 &      0.159 &      0.170 &            \\

breast.2.class &        210 &       4869 &      0.352 &      0.371 &    $\circ$ \\

breast.3.class &        253 &       4869 &      0.397 &      0.415 &    $\circ$ \\

     colon &         92 &       2000 &      0.162 &      0.158 &            \\

  leukemia &         24 &       3051 &      0.053 &      0.064 &            \\

  lymphoma &         31 &       4026 &      0.018 &      0.006 &   $\bullet$ \\

       nci &        197 &       5244 &      0.332 &      0.321 &            \\

  prostate &         88 &       6033 &      0.082 &      0.109 &    $\circ$ \\

     srbct &         51 &       2308 &      0.027 &      0.032 &            \\
\hline
win-lose-tie &          - &          - &          - &      3-7-0 &            \\
\hline
\end{tabular}
\end{table}

\begin{table}[!]
\scriptsize
\centering
\caption{The total number of original features (``All") and the average number of features (from 100 replicates for each data set) selected by different methods. All feature selection methods are able to select a small number of features. Here GRRF(0.1) and RRF(0.9) select a similar number of features, but it is shown later that GRRF(0.1) is more accurate than RRF(0.9) (with an RF classifier) for most data sets. \label{table:aveFeaN}}
% Table generated by Excel2LaTeX from sheet 'Sheet1'
\begin{tabular}{ccccccc}
\hline
    & All & GRRF(0.1) & GRRF(0.2) & RRF(0.9) & varSelRF & LASSO \\
\hline
adenocarcinoma & 9868 &  20 &  15 &  23 &   4 &   2 \\

brain & 5597 &  22 &  12 &  27 &  28 &  24 \\

breast.2.class & 4869 &  59 &  25 &  60 &   7 &   7 \\

breast.3.class & 4869 &  77 &  31 &  78 &  12 &  18 \\

colon & 2000 &  29 &  13 &  27 &   4 &   7 \\

leukemia & 3051 &   6 &   4 &   6 &   2 &   8 \\

lymphoma & 4026 &   5 &   4 &   4 &  81 &  25 \\

nci & 5244 &  63 &  26 &  61 &  60 &  53 \\

prostate & 6033 &  18 &  13 &  19 &   6 &  12 \\

srbct & 2308 &  13 &   9 &  14 &  34 &  28 \\
\hline
\end{tabular}
\end{table}

\begin{table}[!]
\scriptsize
\centering
\caption{The error rates of RF applied to the feature subsets selected by GRRF(0.1), GRRF(0.2), RRF(0.9), varSelRF and LASSO logistic regression, respectively (to three decimal places). The win-lose-tie results of each competitor compared to GRRF(0.1) with RF are shown. Here ``$\circ$" or ``$\bullet$" represents a significant difference between a method and GRRF(0.1) with RF at the 0.05 level, according to the paired t-test.
%Particularly, ``$\circ$" or ``$\bullet$" standards for a higher or lower average error rate of a method compared to GRRF(0.1)-RF.
Here GRRF(0.1) leads to competitive accuracy performance, compared to GRRF(0.2), RRF(0.9), LASSO and varSelRF. \label{table:aveErrRF}}
% Table generated by Excel2LaTeX from sheet 'Sheet1'
\begin{tabular}{cccccccccc}
\hline
           &  GRRF(0.1) &  GRRF(0.2) &            &   RRF(0.9) &            &   varSelRF &            &      LASSO &            \\
           &  -RF &  -RF &            &   -RF &            &   -RF &            &      -RF &            \\
\hline
adenocarcinoma &      0.169 &      0.168 &            &      0.160 &            &      0.212 &    $\circ$ &      0.189 &    $\circ$ \\

     brain &      0.214 &      0.259 &    $\circ$ &      0.234 &            &      0.231 &            &      0.259 &    $\circ$ \\

breast.2.class &      0.345 &      0.359 &    $\circ$ &      0.367 &    $\circ$ &      0.386 &    $\circ$ &      0.366 &    $\circ$ \\

breast.3.class &      0.387 &      0.403 &    $\circ$ &      0.410 &    $\circ$ &      0.418 &    $\circ$ &      0.400 &            \\

     colon &      0.175 &      0.186 &    $\circ$ &      0.190 &    $\circ$ &      0.232 &    $\circ$ &      0.180 &            \\

  leukemia &      0.080 &      0.093 &            &      0.091 &            &      0.107 &    $\circ$ &      0.076 &            \\

  lymphoma &      0.067 &      0.076 &            &      0.098 &    $\circ$ &      0.022 &   $\bullet$ &      0.009 &   $\bullet$ \\

       nci &      0.389 &      0.452 &    $\circ$ &      0.405 &            &      0.418 &    $\circ$ &      0.396 &            \\

  prostate &      0.085 &      0.085 &            &      0.101 &    $\circ$ &      0.085 &            &      0.088 &            \\

     srbct &      0.064 &      0.072 &            &      0.074 &            &      0.035 &   $\bullet$ &      0.007 &   $\bullet$ \\
\hline
win-lose-tie &          - &      1-9-0 &            &      1-9-0 &            &      3-7-0 &            &      3-7-0 &            \\
\hline
\end{tabular}

\end{table}

\begin{table}[!]
\scriptsize
\centering
\caption{The error rates of C4.5 applied to the feature subsets selected by GRRF(0.1), GRRF(0.2), RRF(0.9), varSelRF and LASSO logistic regression, respectively. The win-lose-tie results of each competitor compared to GRRF(0.1) with C4.5 are calculated. Here ``$\circ$" or ``$\bullet$" represents a significant difference between a method and GRRF(0.1) with C4.5 at the 0.05 level, according to the paired t-test.
%Particularly, ``$\circ$" or ``$\bullet$" standards for a higher or lower average error rate of a method compared to GRRF(0.1)-C4.5.
The GRRF methods and the other methods have perform similarly in terms of the accuracy with C4.5. As expected, C4.5 has noticeably higher error rates than RF, shown in Table \ref{table:aveErrRF}. \label{table:aveErrTree}}
\begin{tabular}{cccccccccc}
\hline
           &  GRRF(0.1) &  GRRF(0.2) &            &   RRF(0.9) &            &   varSelRF &            &      LASSO &            \\
           &  -C4.5 &  -C4.5 &            &   -C4.5 &            &   -C4.5 &            &      -C4.5 &            \\
\hline
adenocarcinoma &      0.260 &      0.251 &            &      0.241 &            &      0.248 &            &      0.218 &   $\bullet$ \\

     brain &      0.438 &      0.428 &            &      0.418 &            &      0.394 &   $\bullet$ &      0.476 &    $\circ$ \\

breast.2.class &      0.402 &      0.411 &            &      0.412 &            &      0.413 &            &      0.391 &            \\

breast.3.class &      0.497 &      0.494 &            &      0.491 &            &      0.481 &            &      0.482 &            \\

     colon &      0.274 &      0.262 &            &      0.298 &    $\circ$ &      0.273 &            &      0.242 &   $\bullet$ \\

  leukemia &      0.133 &      0.157 &    $\circ$ &      0.138 &            &      0.146 &            &      0.166 &    $\circ$ \\

  lymphoma &      0.126 &      0.125 &            &      0.126 &            &      0.152 &    $\circ$ &      0.138 &            \\

       nci &      0.626 &      0.636 &            &      0.658 &    $\circ$ &      0.641 &            &      0.646 &            \\

  prostate &      0.157 &      0.138 &   $\bullet$ &      0.154 &            &      0.115 &   $\bullet$ &      0.144 &            \\

     srbct &      0.235 &      0.208 &   $\bullet$ &      0.207 &   $\bullet$ &      0.209 &   $\bullet$ &      0.194 &   $\bullet$ \\
\hline
win-lose-tie &          - &      7-3-0 &            &      5-5-0 &            &      6-4-0 &            &      6-4-0 &            \\
\hline
\end{tabular}
\end{table}

\subsubsection{Feature selection and classification}
First compare the accuracy of RF applied to all the features, denoted as ``All", and the least regularized subset (i.e., features selected by RRF(1)). The number of features and the error rates are shown in Table \ref{table:RFGRRF}. The win-lose-tie results of ``All" compared to RRF(1) are also shown in the tables. To investigate the statistical significance of these results, we applied the paired t-test to the error rates of the two methods from 100 replicates for each data set. The data sets with a significant difference at the 0.05 level are marked with ``$\circ$" or ``$\bullet$" in the table.

The least regularized subset not only has many fewer features than ``All", but also leads to better accuracy  performance on 7 data sets out of 10, and the error rates are significantly different on 4 data sets. Therefore, RRF(1) not only improves interpretability by reducing the number of features, but also can improve the accuracy performance of classification, even for RF, considered as a strong classifier capable of handling irrelevant and relevant variables \citep{houtaoFSReg2012}. It should also be noted that although the least regularized subset is much smaller than the original feature set, the size may still be considered large in some cases, e.g., more than 200 features for the breast.2.class data set. GRRF and RRF with larger regularization, investigated in the following experiments, are able to further reduce the number of features.

Next compare GRRF to RRF and two well-known methods: varSelRF and LASSO logistic regression. The regularization parameter of LASSO logistic regression was selected from \{0.01, 0.02, 0.03, 0.05, 0.1, 0.2\} by 10-fold CV. Here $\gamma\in$ \{0.1, 0.2\} was used for GRRF (i.e., GRRF(0.1) or GRRF(0.2)), and $\lambda$ = 0.9 was used for RRF (i.e., RRF(0.9)). We used a fixed parameter setting for GRRF or RRF, as the parameter sensitivity analysis in the following section shows a consistent trend that GRRF or RRF tends to select more features and also tends to be more accurate, for a smaller $\gamma$ or a larger $\lambda$. We chose these parameters so that a reasonably small number of features can be selected. One can also use cross-validation error to determine an appropriate parameter value customized for each data set and potentially improve these results.

%In each experiment, GRRF, RRF, varSelRF or GRRF was used to select a subset of features based on the training folds. Then an RF and C4.5, respectively, were built on the feature subset and applied to the testing fold to get the testing error rates. The average number of features selected by a feature selection method and the average error rates of the RF and C4.5 from the 10-fold CV can be calculated.

The total number of original features and the average number of features selected by each feature selection method are shown in Table \ref{table:aveFeaN}. All the feature selection methods are able to select a small number of features.

The average error rates of RF applied to all the features (``All") and the subsets selected by different feature selection methods are shown in Table \ref{table:aveErrRF}. GRRF(0.1) with RF outperforms GRRF(0.2) with RF on 9 data sets out of 10, 5 of which have significant differences at the 0.05 level. Even though GRRF(0.1) selects more features than GRRF(0.2), the sizes of the feature subsets are reasonably small (all less than 80 features). %Furthermore, one can rank the selected features by an ordinary RF.

GRRF(0.1) and RRF(0.9) select a similar number of features. However, GRRF(0.1) with RF outperforms RRF(0.9) with RF on 9 data sets, 5 of which have significant differences at the 0.05 level. Consequently, GRRF selects stronger features than RRF, which is consistent with the simulated experiments. According to the discussion in Section \ref{sec:IssueSparse}, a feature with the maximum Gini information gain in a node with a small number instances may not be truly strong. Yet RRF adds this feature to the subset.

GRRF(0.1) with RF outperforms varSelRF with RF on 7 data sets, 6 of which have significant differences. %GRRF(0.1) with RF has higher error rates than varSelRF-RF on 3 data sets, 2 of which have significant differences.
Therefore, GRRF(0.1) may be more favorable than varSelRF for the data sets considered here. Also, as shown in the following section, GRRF has a clear advantage over varSelRF in terms of computational time. Furthermore, GRRF(0.1) with RF outperforms LASSO logistic regression with RF on 7 data sets, 3 of which have significant differences.
%, and is worse than LASSO-RF on 3 data sets, 2 of which have significant differences.
The accuracy performance may be improved by applying a logistic regression model to the features selected by LASSO logistic regression. However, tree models like GRRF have a few desirable properties compared to LASSO logistic regression: they can naturally handle mixed categorical and numerical features, and multiple classes, etc. %, and, thus, it requires much less effort on data pre-processing and standardization.

The average error rates of C4.5 applied to all the features (``All") and the subsets selected by different feature selection methods are shown in Table \ref{table:aveErrTree}. It can be seen that the error rates of C4.5 are clearly higher than RF shown in Table \ref{table:aveErrRF}. Indeed, RF has been considered as a stronger classifier than C4.5. Interestingly, the differences between the methods in terms of C4.5 are smaller than the RF results. As mentioned by \citep{houtaoFSReg2012}, a relatively weaker classifier is less capable of capturing information from data than a stronger classifier. Consequently, a feature subset that includes strong features, but misses the features of small contributions may not affect the accuracy of C4.5 much, but can affect the accuracy of RF. A weak classifier should only be used for evaluating a feature subset if that classifier is actually used for classification after feature selection. However, if a strong classifier is used for classification after feature selection, or the objective is to evaluate the information contained in the feature subset, a strong classifier should be considered \citep{houtaoFSReg2012}.

\def\myWidth{2.6}
\begin{figure*}[!]
\centering
\subfigure[\label{fig:RRF_fea}]{
\includegraphics[width= \myWidth in]{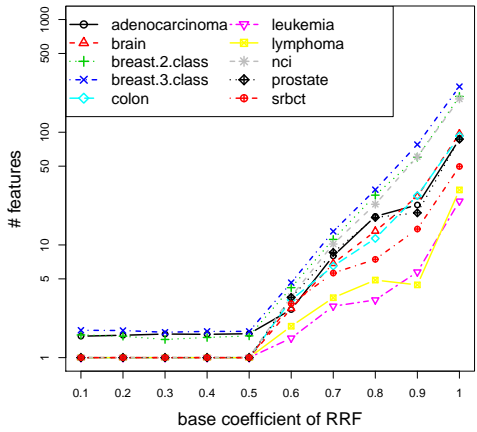}}
\subfigure[\label{fig:GRRF_fea}]{
\includegraphics[width= \myWidth in]{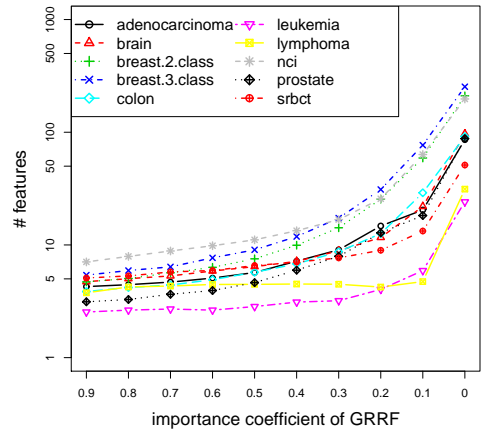}}
\subfigure[\label{fig:RRF_err}]{
\includegraphics[width= \myWidth in]{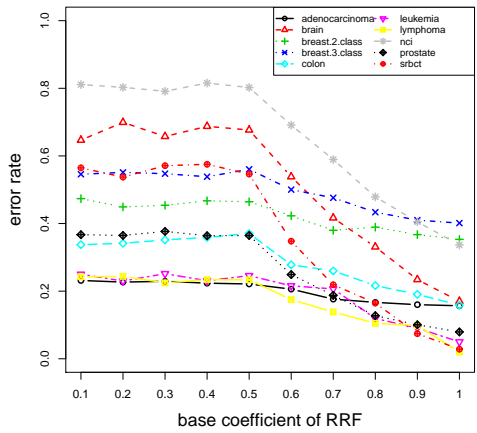}}
\subfigure[\label{fig:GRRF_err}]{
\includegraphics[width= \myWidth in]{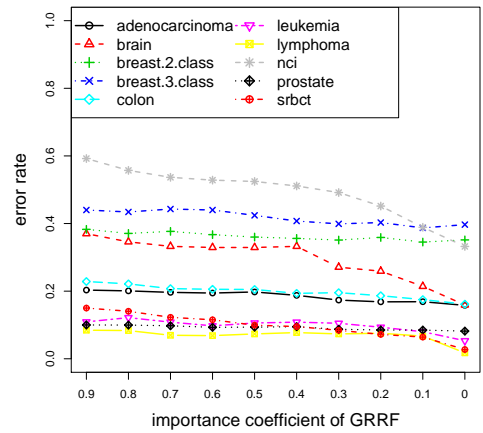}}
\caption[2]{The performance of RRF for selected values of $\lambda$ and GRRF for selected values of $\gamma$. %(the values are ordered by the degree of regularization so that the trends from the two are similar on the plots).
Figure \ref{fig:RRF_fea} shows the number of features selected by RRF. A smaller $\lambda$ leads to fewer features.
Figure \ref{fig:GRRF_fea} shows the number of features selected by GRRF. A larger $\gamma$ leads to fewer features.
Figure \ref{fig:RRF_err} shows the error rates of RF applied to the feature subsets selected by RRF for different $\lambda$. The error rates tend to decrease as $\lambda$ increases.
Figure \ref{fig:GRRF_err} shows the error rates of RF applied to the feature subsets selected by GRRF for different $\gamma$.
The error rates tend to decrease as $\gamma$ decreases, but are reasonably robust.
\label{fig:err}}
\end{figure*}

%\def\myWidth{3.5}
%\begin{figure*}[ht]
%\centering
%\includegraphics[width= \myWidth in]{correlationFea.eps}
%\caption[2]{Correlations between the features selected by different methods.  \label{fig:timeCompute}}
%\end{figure*}

\def\myWidth{3.5}
\begin{figure*}[ht]
\centering
\includegraphics[width= \myWidth in]{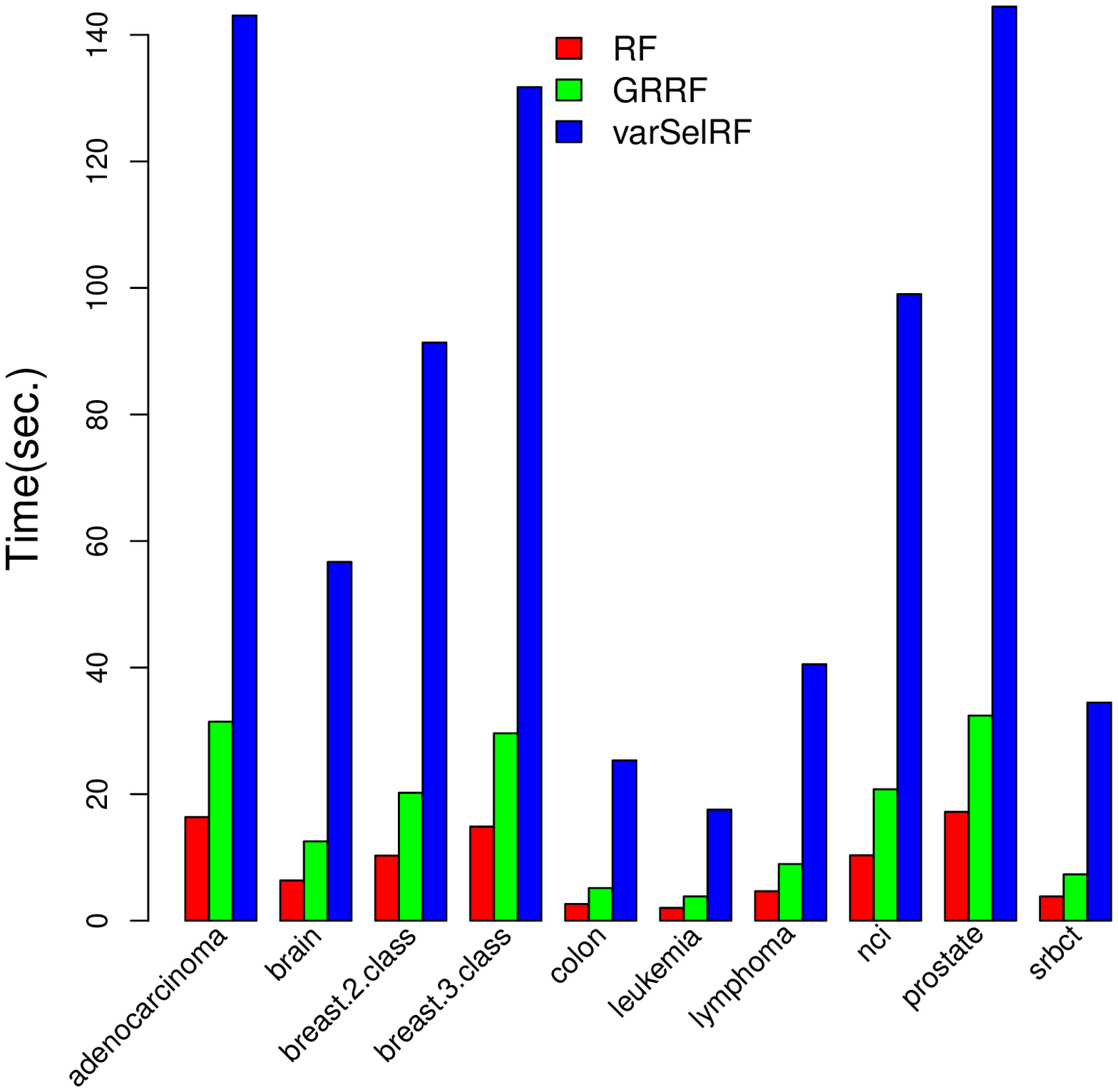}
\caption[2]{Computational time of RF, GRRF and varSelRF. GRRF is more computationally efficient than varSelRF. GRRF takes about twice as much as RF as it builds two ensembles.  \label{fig:timeCompute}}
\end{figure*}

\subsubsection{Parameter Sensitivity and Computational Time}
We investigated the performance of RRF and GRRF with different parameter settings: $\lambda\in$ \{0.1, 0.2, 0.3, 0.4, 0.5, 0.6, 0.7, 0.8, 0.9, 1\} for RRF and $\gamma\in\{0.9, 0.8, 0.7, 0.6, 0.5, 0.4, 0.3, 0.2, 0.1, 0\}$ for GRRF. The parameter values are arranged by the degree of regularization in a decreasing order for both methods. %For each parameter setting, 100 we used 10-fold CV for evaluation. %The procedure is described as follows. First randomly partition a data set into 10 subsets (folds) of equal (or approximately equal) sizes. Then conduct 10 experiments, in each of which use nine folds for training and the remaining one fold for testing. The average of the error rates from the 10 experiments is used as an estimate of the generalization error. Note for each experiment, RRF or GRRF is used to select a set of features based on the training folds. Then an RF is built on the feature subset and applied to the testing fold to get the testing error rate.

%For each parameter setting of each method, let $\bar{f}_i$ and $\bar{e}_i$, respectively, denote the average number of features selected and the average error rates of RF for the $i^{th}$ ($i$=1,2,...,10) round of CV.

The sizes of the feature subsets, averaged over 100 replicates, for each parameter setting of RRF and GRRF, are shown in Figures \ref{fig:RRF_fea} and \ref{fig:GRRF_fea}, respectively. The number of features tends to increase as $\lambda$ increases for RRF, or as $\gamma$ decreases for GRRF.
The consistent trend illustrates that one can control the size of the feature subset by adjusting the parameters. %Note for $\gamma=1$, the base coefficient does not affect the model, and, thus, that line tends to be stable over all $\lambda_0$s.

The error rates, averaged over 100 replicates, for each parameter setting of RRF and GRRF are shown in Figures \ref{fig:RRF_err} and \ref{fig:GRRF_err}, respectively. In general, the error rates tend to decrease as $\lambda$ increases for RRF, or as $\gamma$ decreases for GRRF. However, for most data sets, the error rates of GRRF seem to be reasonably robust to the changes of $\gamma$.
As mentioned, RRF(1) and GRRF(0) are equivalent, and, therefore, they have similar number of features and error rates for every data set (differing only by random selections).

The computational time of the RF-based methods: RF, GRRF and varSelRF is shown in Figure \ref{fig:timeCompute}. As expected, RF was fast for training on these data sets. GRRF builds two ensemble models, and, thus, the computational time of GRRF is only about twice as much as RF. However, varSelRF needs to build multiple ensemble models, and the computational advantage of GRRF over varSelRF is clear.

\section{Conclusions}\label{sec:CON}
We derive an upper bound for the number of distinct Gini information gain values in a tree node for binary classification problems. The upper bound indicates that the Gini information gain may not be able to distinguish features at nodes with a small number of instances, which poses a challenge for RRF that selects a feature only using the instances at a node. Motivated by this node sparsity issue, we propose an enhanced method called the guided, regularized random forest (GRRF), in which a preliminary random forest is used to generate the initial variable importance scores to guide the regularized feature selection process of RRF. The importance scores from the preliminary RF help GRRF select better features when many features share the same maximal Gini information gain at a node. For the experiments here, GRRF is more favorable than RRF, computationally efficient, selects a small set of features, and has competitive accuracy performance.

Feature selection eliminates irrelevant or redundant features, but also may eliminate features of small importance. This may not affect the performance of a weak classifier which is less capable of capturing small information, but may affect the performance of a strong classifier such as RF \citep{houtaoFSReg2012}. Still, we found that the least regularized subset selected by RRF with the minimal regularization produces better accuracy performance than the complete feature set.
%It should also be noted that although the least regularized subset is significantly smaller than the complete feature set, in some cases it may still contain too many features for a domain expert to investigate. One can either increase the regularization to get a smaller feature subset, or rank the selected features by an ordinary RF, and then investigate the top ranked features.

Finally we note that although RRF and GRRF can be used as classifiers, they are designed for feature selection. The trees in RRF and GRRF are not built independently as the features selected in previous trees have an impact on the trees built later. Therefore, as a classifier, RRF or GRRF may have a higher variance than RF because the trees are correlated. Consequently, in this work we applied RF on the feature subset selected by GRRF or RRF for classification.

% faster than varSelRF, can select smaller feature subsets with similar accuracy performance, and outperform other two well-known methods varSelRF and LASSO logistic regression in terms of either the sizes of the feature subsets or the accuracy of RF.

%\section{Acknowledgements}
%  This research was partially supported by ONR grant N00014-09-1-0656.

\bibliographystyle{model1b-num-names}
\bibliography{RRF}

\end{document}